\documentclass{article}

\usepackage{ArXiv_style}

\usepackage[utf8]{inputenc} 
\usepackage[T1]{fontenc}    
\usepackage{url}            
\usepackage{booktabs}       
\usepackage{amsfonts}       
\usepackage{amsmath}
\usepackage{hyperref}       
\usepackage{cleveref}
\usepackage{algorithm}
\usepackage{algorithmic}
\usepackage{nicefrac}       
\usepackage{microtype}      
\usepackage{xcolor}         
\usepackage{tikz}
\usepackage{caption}
\usepackage{subcaption}
\usepackage{amsthm}

\theoremstyle{plain}
\newtheorem{theorem}{Theorem}[section]

\theoremstyle{definition}

\theoremstyle{remark}

\newtheorem{example}[theorem]{Example}

\title{EM-NeSy: Expectation Maximization for Neurosymbolic Learning\thanks{Code will be released upon publication.}}

\author{%
  Annegret Seibt \\
  Department of Computer Science\\
  KU Leuven\\
  \texttt{annegret.seibt@kuleuven.be} \\
  \And
  Luc De Raedt \\
  Department of Computer Science\\
  KU Leuven \\
  \texttt{luc.deraedt@kuleuven.be} \\
   \AND
  Giuseppe Marra \\
  Department of Computer Science\\
  KU Leuven\\
  \texttt{giuseppe.marra@kuleuven.be}  \\
}

\begin{document}

\maketitle

\begin{abstract}
        Neurosymbolic (NeSy) models integrate neural networks and symbolic reasoning for robust and interpretable AI.
        State-of-the-art NeSy models require that the symbolic component is expressed in a differentiable way, often complicating the use of approximate inference.
        We propose EM-NeSy which casts probabilistic NeSy learning as an instance of the Expectation-Maximization (EM) algorithm.
        In the expectation step, we compute the posterior over the neurally predicted symbols conditioned on the label via probabilistic inference. In the maximization step, we update the neural parameters based on this posterior using gradient descent only through the neural component.
        This formulation unlocks the full potential of the EM algorithm for NeSy learning.
        It allows NeSy to extend naturally to approximate reasoning without any additional modifications or differentiability requirements of the symbolic component.
        Furthermore, it recovers the standard end-to-end gradient-based NeSy setting under exact inference. 
        Our experimental results demonstrate the scalability and computational efficiency of EM-NeSy.
\end{abstract}

\section{Introduction}
    \label{sec_Intro}
    Neurosymbolic (NeSy) models combine the perceptual strengths of neural networks with the reasoning capabilities of symbolic logic, enabling systems to process raw sensory data while maintaining structured, interpretable reasoning over high-level concepts.
    The integration of continuous neural networks and discrete symbolic logic can be realized through numeric \cite{demeester2016lifted,diligenti2017sbr, yang2017differentiable,minervini2017adversarial, evans2018dilp, dong2019nlm, raghothaman2019difflog, rocktaschelntp, weber2019nlprolog} or fuzzy \cite{sourek2018lrnn, wang2020fuzzy, badreddine2022ltn, pryor2022neupsl} relaxations of logic statements.
    Alternatively, De Raedt \cite{deraedt2019nesydef} coined the formula
    $$\text{Neurosymbolic = Neural + Logical + Probabilistic},$$
    highlighting probabilistic semantics as a natural choice for linking neural and logical components, as they provide a principled way to model uncertainty across different possible neural network outputs.
    This paradigm captures an approach that is now widely adopted by state-of-the-art NeSy models \cite{xu2018semantic,manhaeve2018deepproblog, yang2020neurasp, huang2021scallop, marra2021nmlns, winters2022deepstochlog},
    which typically implement it by performing probabilistic inference over the outputs of neural networks.
    This enables them to train their neural component in an end-to-end fashion -- meaning that they backpropagate gradients, first through the symbolic and then the neural component.
    While probabilities are easily differentiable, exact probabilistic inference is \#P-hard and therefore not always possible --
    and often not necessary during training since sufficient learning signals can be obtained by approximate inference.
    However, when resorting to approximate inference methods, differentiability can become challenging.
    The case of sampling, which is inherently non-differentiable, has been extensively studied and several specialized approaches have been proposed, including gradient estimators \cite{sutton1999reinforce, desmet2023indecater, maene2024weightme} and reparameterization-based techniques \cite{kingma2013vae}.
    But differentiability issues might also arise in iterative approximate inference methods like Pearl's belief propagation algorithm \cite{pearl1994belief} or dual decomposition and linear-programming-based relaxations for approximate maximum-a-posteriori (MAP) inference \cite{jojic2010dual,sontag2011dual} -- which are theoretically differentiable but, in practice, backpropagating through all their iterations would introduce a lot of overhead.

    We propose a novel learning approach for NeSy systems based on the Expectation-Maximization (EM) algorithm which generally decomposes into alternating expectation and maximization steps \cite{dempster1977em}.
    In every expectation step (E-step), a posterior distribution over the neurally predicted symbolic variables conditioned on the target variable label is computed.
    In every maximization step (M-step), the parameters of the neural component are then updated based on the posterior computed in the E-step via gradient descent solely through the neural component itself.
    Hence, irrespective of differentiability, the E-step can be carried out using any symbolic inference engine -- whether exact or approximate. 
    This is a major operational advantage, as any specialized and optimized software can be employed, regardless of the symbolic framework.
    Furthermore, EM-NeSy recovers the standard NeSy setting under exact reasoning.
    
    In summary, our work makes the following key contributions:
\begin{itemize}
    \item We recast NeSy learning within the -- traditionally fully probabilistic -- Expectation-Maximization (EM) algorithm framework, enabling the use of advanced EM variants such as Generalized EM \cite{dempster1977em}, Stochastic EM \cite{celeux1996stochastic} or Hard EM \cite{ruggieri2020hard} for improved flexibility and scalability;
    \item We propose a general framework for integrating any probabilistic inference engine into a NeSy model without requiring differentiability of the symbolic inference process;
    \item We provide experimental evaluation demonstrating the effectiveness and generality of the proposed approach across diverse exact and approximate inference methods.
\end{itemize}
    
    This paper is structured as follows:
    In \cref{sec_prelim}, we review the standard NeSy learning setting and the EM algorithm.
    \Cref{sec_emnesy} presents our EM-NeSy learning framework, including a proof of its equivalence to standard NeSy learning under exact inference and an EM-specific approximation to accelerate convergence.
    \cref{sec_general} demonstrates how the framework generalizes to alternative inference schemes, with a particular focus on sampling-based and iterative approaches.
    In \cref{sec_related} we discuss related work.
    Finally, \cref{sec_experiments} reports experimental results.

\section{Preliminaries}
    \label{sec_prelim}

\subsection{The Expectation-Maximization algorithm}
    The Expectation-Maximization (EM) algorithm, introduced by Dempster et al.~\cite{dempster1977em}, is an iterative algorithm for computing maximum likelihood estimates in parameterized probabilistic models with latent variables.
    Consider a latent variable model parameterized by $ \boldsymbol \theta$ with observed variables $\mathbf Y$ and latent variables $\mathbf Z$. Let $\mathbf y$ denote an instantiation of $\mathbf Y$ and $R(\mathbf Z)$ the range -- i.e.\ the set of all instantiations -- of $\mathbf Z$.
    Direct maximization of the marginal likelihoods $
        \log p(\mathbf y \mid \boldsymbol \theta) = \log \sum_{\mathbf z \in R(\mathbf Z)} p(\mathbf y, \mathbf z \mid \boldsymbol \theta) $
    is often intractable due to the summation over the latent variables $\mathbf Z$.
    EM addresses this by alternating between two steps:
    The E-step computes the expected complete-data log-likelihood under the current parameter estimate $\boldsymbol \theta^{(t)}$
    \begin{equation}
        \label{Estep}
        Q(\boldsymbol \theta \mid\boldsymbol \theta^{(t)}) = \mathbb{E}_{\mathbf z \sim \mathbf Z \mid \mathbf Y, \boldsymbol \theta^{(t)}} \big[ \log p(\mathbf Y, \mathbf z \mid \boldsymbol \theta) \big].
    \end{equation}
    Note that \( Q(\boldsymbol \theta \mid \boldsymbol \theta^{(t)}) \) is a function of \(\boldsymbol \theta\); therefore, the actual computation in the E-step consists of determining the posterior distribution $ p(\mathbf Z \mid \mathbf Y, \boldsymbol \theta^{(t)}) $.
    The M-step updates the parameters by maximizing the expectation \eqref{Estep}:
    \begin{equation}
        \label{Mstep}
    \boldsymbol \theta^{(t+1)} = \arg \max_{\boldsymbol \theta} Q(\boldsymbol \theta \mid \boldsymbol \theta^{(t)}).
    \end{equation}
    These steps are repeated until convergence to a local optimum. The algorithm guarantees that the likelihood does not decrease at each iteration and is widely used in probabilistic models, such as mixture models and hidden Markov models.

    A common extension is the Generalized EM (GEM) algorithm \cite{dempster1977em}, which relaxes the M-step requirement: instead of fully maximizing the $Q$-function \eqref{Estep}, GEM only requires an update that increases it.
    This flexibility enables variants such as
    \begin{itemize}
        \item \textbf{Hard EM} \cite{ruggieri2020hard} which instead of computing the expectation in \eqref{Estep}, only computes the MAP assignment $
            Q(\boldsymbol \theta \mid \boldsymbol \theta^{(t)})_{\text{hard}}
            \;=\;
            \max_{\mathbf z \in R(\mathbf Z)} \; \log p(\mathbf Y, \mathbf z \mid \boldsymbol \theta);$
        \item \textbf{Online EM} \cite{cappe2009online} or \textbf{Stochastic EM} \cite{celeux1996stochastic}
        which only use a single observation $\{\mathbf Y=\mathbf y\}$ or mini-batches instead of the entire dataset to compute the $Q$-function $            Q(\boldsymbol \theta \mid \boldsymbol \theta^{(t)})_{\text{online}} = \mathbb{E}_{\mathbf Z \mid \mathbf Y=\mathbf y, \boldsymbol \theta^{(t)}} \big[ \log p(\mathbf Y=\mathbf y, \mathbf Z \mid \boldsymbol \theta) \big]$
        in the E-step.
        Consequently, it only performs a partial update $
            \boldsymbol \theta^{(t+1)}_{\text{online}} = \boldsymbol \theta^{(t)} + \gamma^{(t)} \bigl( {\boldsymbol \theta}^{(t+1)} - \boldsymbol \theta^{(t)} \bigr)$
        in the M-step for some learning rate $\gamma^{(t)}$, making the algorithm suitable for large-scale or streaming data -- and also to NeSy learning;
        \item \textbf{Multi-M EM}, a variant of online EM that performs multiple partial M-steps per E-step.
    \end{itemize}

\subsection{Probabilistic NeSy Learning as a Latent Variable Model}
    \label{sec_latent}
    State-of-the-art NeSy models combine a neural and a symbolic component by performing probabilistic inference over the outputs of neural networks to predict the distribution of a queried target variable \cite{manhaeve2018deepproblog, huang2021scallop, winters2022deepstochlog}.
    
    \begin{example}[Visual Sudoku \cite{augustine2022visual}]
    Suppose, for instance, we have a dataset where the input is a $9 \times 9$ grid of handwritten digits and the label is whether they form a valid Sudoku but there is no direct supervision on single digits.
    A natural approach to train a neurosymbolic model on such a dataset is as follows (see \cref{fig:visudo}):
    First we employ a neural component to classify each handwritten digit, producing a probability distribution over the possible values $\{0,\dots,8\}$ for each digit.
    Then a symbolic component that encodes the rules of Sudoku can use these probabilistic digit predictions to compute the likelihood that the entire grid constitutes a valid Sudoku configuration.
    Finally, this likelihood can be compared to the provided label (valid or invalid) to define a loss function, allowing the entire system to be trained end-to-end.
    In other words, the model learns to adjust the neural predictions so that the probability of satisfying the Sudoku constraints aligns with the ground truth.
    \begin{figure}[t]
      \centering
      \begin{subfigure}[t]{0.16\textwidth}
        \centering
        \includegraphics[width=\linewidth]{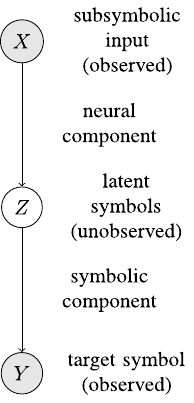}
        \caption{The probabilistic NeSy model is a latent variable model.}
        \label{fig_nesy_model}
      \end{subfigure}\hfill
      \begin{subfigure}[t]{0.80\textwidth}
        \centering
        \includegraphics[width=\linewidth]{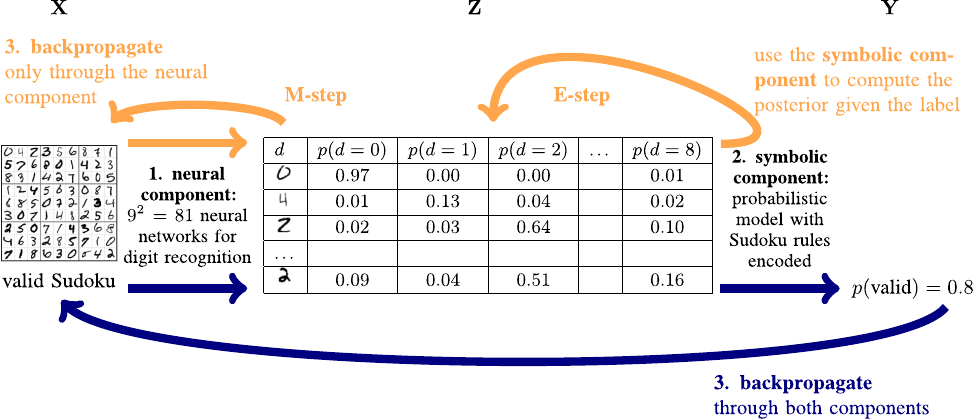}
        \caption{
        To train on Visual Sudoku~\cite{augustine2022visual}, an 
        \textcolor{blue!70!black}{\textbf{end-to-end differentiable NeSy}} model neurally predicts 81 digit distributions, computes the Sudoku-validity probability via the symbolic model, and backpropagates through both parts.
        \textcolor{orange!70!white}{\textbf{EM-NeSy}} also neurally predicts prior distributions for all digit, but uses the symbolic model to obtain posterior digit distributions conditioned on the label and trains the neural predictors by matching priors to posteriors, avoiding gradients through the symbolic component. (Sudoku grid and digits from~\cite{augustine2022visual}.)}
        \label{fig:visudo}
      \end{subfigure}
      \caption{Comparison of end-to-end differentiable NeSy learning and EM-NeSy learning.}
      \label{fig:main}
    \end{figure}
    \end{example}

    Consider the general learning task of predicting a symbolic output variable $\mathbf Y$ from a subsymbolic input variable $\mathbf X$, given a dataset of observed samples and some background knowledge.
    Then we can interpret a NeSy model solving this task as a latent variable model (see \cref{fig_nesy_model}) with
    a subsymbolic input variable $\mathbf X$, a hidden symbolic variable $\mathbf Z$ \footnote{In the following, we assume $\mathbf{Z}$ to be discrete. The theory developed, however, extends naturally to the continuous case by replacing summations with integrals.} representing the output of the neural component and a symbolic output variable $\mathbf Y$ that can be inferred from $\mathbf Z$ using the symbolic component that encodes the background knowledge.

    The symbolic component is a probabilistic model that predicts the target variable $\mathbf Y$ from the latent symbolic variable $\mathbf Z$ via probabilistic inference, i.e. by computing the distribution $p_{\text{Sy}}(\cdot |\mathbf z)$ of the output variable $\mathbf Y$ conditioned on the latent variable $\mathbf Z$ taking the state $\mathbf z$.
    Meanwhile, the neural component serves as a conditional prior for the symbolic component, by predicting the distribution $p_{\text{Ne}}(\cdot |\mathbf x, \boldsymbol \theta)$ of the latent variable $\mathbf Z$ given the subsymbolic input $\mathbf x$ via neural networks $\mathbf N$ parameterized by $\boldsymbol \theta$.

    Thus, we can abstractly represent the neurosymbolic predictor as a conditional probability distribution
    \begin{align}
        p(\mathbf y|\mathbf x,\boldsymbol \theta)
        = \sum_{\mathbf z \in R(\mathbf Z)} p(\mathbf y,\mathbf z|\mathbf x,\boldsymbol \theta)
        = \sum_{\mathbf z \in R(\mathbf Z)} p_{\text{Ne}}(\mathbf z|\mathbf x,\boldsymbol \theta)p_{\text{Sy}}(\mathbf y|\mathbf z), \label{eq_NeSy_pred}
    \end{align}
    where we denote by $R(\mathbf Z)= R(Z_1) \times \dots \times R(Z_n)$ the range (i.e.\ the set of all possible values) of $\mathbf Z$.
    The loss function in the standard NeSy setting is defined on the target-variable level as the negative log-likelihood
    \begin{align}
        \mathcal L(\boldsymbol\theta) = - \log p(\mathbf y|\mathbf x,\boldsymbol \theta)
        = - \log \sum_{\mathbf z \in R(\mathbf Z)} p_{\text{Ne}}(\mathbf z|\mathbf x, \boldsymbol\theta) p_{\text{Sy}}(\mathbf y|\mathbf z). \label{std_loss}
    \end{align}
    It is typically minimized by backpropagating gradients through the symbolic and the neural component.
    Note that learnable parameters could also be introduced to the symbolic component and be jointly optimized with the parameters of the neural component.


\section{EM-NeSy Learning}
    \label{sec_emnesy}
    In this section, we will follow the interpretation of probabilistic NeSy as a conditional latent variable model from \cref{sec_latent} and introduce our EM-NeSy learning framework.
    Compared to the standard NeSy setting, we will change the use of the symbolic component during learning:
    instead of letting it predict the log-likelihood of the label given the neurally predicted latent variables, we will use it to compute the posterior distribution of these neurally predicted latent variables given the label:
    \begin{equation}
        \label{eq_latent_posterior}
        p(\mathbf z|\mathbf y,\mathbf x,\boldsymbol \theta) = \frac{p_{\text{Ne}}(\mathbf z|\mathbf x, \boldsymbol \theta)p_{\text{Sy}}(\mathbf y|\mathbf z)}{p(\mathbf y|\mathbf x,\boldsymbol \theta)}.
    \end{equation}
    Hence, the symbolic component in EM-NeSy learning acts as a probabilistic corrector, refining and aligning the neural network outputs with the background knowledge.
    It can be thought of as abducing labels
    \begin{equation}
        \label{pseudo_label}
        \ell(\mathbf z) = p(\mathbf z|\mathbf y,\mathbf x,\boldsymbol \theta)
    \end{equation}
    for the neurally predicted latent variables $\mathbf Z$ from the prior distribution $p_{\text{Ne}}(\cdot|\mathbf x, \boldsymbol \theta)$ and the original label $\mathbf y$ of the target variable, which is related to abductive reasoning \cite{poole1993probabilistic, de2007problog} in general.
    Note that these abduced labels can be soft probabilistic labels as in \eqref{pseudo_label} or alternatively hard labels -- in the case of MAP inference: $\ell(\mathbf z) = \mathbb I(\mathbf z = {\arg \max}_{\mathbf z'} p(\mathbf z'|\mathbf y,\mathbf x,\boldsymbol \theta))$, where $\mathbb I$ denotes the indicator function.
    Such a MAP-based abduction is for example performed by Relational Neural Machines \cite{marra2020rnm}.
    This symbolic correction step will be the E-step of our EM-NeSy learning algorithm.
    It can be carried out by any exact or approximate symbolic inference engine and regardless of its differentiability.

    The labels abduced in the E-step can now serve for defining a loss function on the latent-variable level:
    the cross entropy between the neurally predicted prior distributions $p_{\text{Ne}}(\mathbf z|\mathbf x,\boldsymbol \theta)$ of the latent variables and their posterior $\ell(\mathbf z)$:
    \begin{equation}
        \label{em_loss}
        \mathcal L_{\text{EM-NeSy}}(\boldsymbol \theta) = - \sum_{\mathbf z \in R(\mathbf Z)} \ell(\mathbf z) \log p_{\text{Ne}}(\mathbf z|\mathbf x,\boldsymbol \theta).
    \end{equation}
    The M-step of our EM-NeSy learning algorithm uses this loss function to updates the parameters $\boldsymbol \theta$ of the neural component by backpropagating only through the neural component itself, while treating the posterior $\ell(\mathbf z)$ as a constant.
    The entire EM-NeSy algorithm then simply consists of alternating the expectation and maximization steps. For a high-level visualization, see \cref{fig_nesy_model}.

    \label{sec_emnesyproof}
    Under exact inference, EM-NeSy learning recovers the standard NeSy learning setting described in \cref{sec_latent} -- in the sense that it backpropagates the same gradients through the neural component.

    \begin{theorem}
        \label{equivproof}
        The gradients with respect to the neural parameter $\theta$ of the loss $\mathcal L(\theta)$ in the standard NeSy learning setting of \cref{std_loss} and the loss $\mathcal L_{\text{EM-NeSy}}(\theta)$ in the EM-NeSy learning setting of \cref{em_loss} are equal, i.e.\ 
        \begin{equation}
            \nabla_{\boldsymbol \theta} \mathcal L(\boldsymbol \theta) = \nabla_{\boldsymbol \theta} \mathcal L_{\text{EM-NeSy}}(\boldsymbol \theta).
        \end{equation}
    \end{theorem}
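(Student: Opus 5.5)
We will compute both gradients directly and match them term by term. The one subtlety is how to read $\nabla_{\boldsymbol\theta}\mathcal L_{\text{EM-NeSy}}$: as the M-step prescribes, the abduced labels $\ell(\mathbf z)=p(\mathbf z|\mathbf y,\mathbf x,\boldsymbol\theta)$ are held fixed as constants. They are evaluated at the current parameter, so the gradient of \eqref{em_loss} is taken only through $\log p_{\text{Ne}}$. We will make this convention explicit at the start, since it is exactly what makes the identity true. It is the standard fact that the gradient of the $Q$-function at $\boldsymbol\theta=\boldsymbol\theta^{(t)}$ equals the gradient of the marginal log-likelihood, often called Fisher's identity.

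First, differentiate the standard loss \eqref{std_loss}. The symbolic factor $p_{\text{Sy}}(\mathbf y|\mathbf z)$ does not depend on $\boldsymbol\theta$, and the sum over $R(\mathbf Z)$ is finite, so differentiation can be exchanged with summation:
\begin{equation*}
\nabla_{\boldsymbol\theta}\mathcal L(\boldsymbol\theta) = -\frac{\sum_{\mathbf z} p_{\text{Sy}}(\mathbf y|\mathbf z)\,\nabla_{\boldsymbol\theta} p_{\text{Ne}}(\mathbf z|\mathbf x,\boldsymbol\theta)}{p(\mathbf y|\mathbf x,\boldsymbol\theta)}.
\end{equation*}
Next, apply the log-derivative trick $\nabla_{\boldsymbol\theta} p_{\text{Ne}} = p_{\text{Ne}}\,\nabla_{\boldsymbol\theta}\log p_{\text{Ne}}$. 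This is valid wherever $p_{\text{Ne}}(\mathbf z|\mathbf x,\boldsymbol\theta)>0$, which holds for softmax outputs. If some $p_{\text{Ne}}$ vanishes, that term contributes zero on both sides, and we will note this briefly. The gradient then becomes
\begin{equation*}
-\sum_{\mathbf z}\frac{p_{\text{Ne}}(\mathbf z|\mathbf x,\boldsymbol\theta)\,p_{\text{Sy}}(\mathbf y|\mathbf z)}{p(\mathbf y|\mathbf x,\boldsymbol\theta)}\,\nabla_{\boldsymbol\theta}\log p_{\text{Ne}}(\mathbf z|\mathbf x,\boldsymbol\theta).
\end{equation*}

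Third, recognise the fraction as the posterior \eqref{eq_latent_posterior}, which is exactly $\ell(\mathbf z)$ from \eqref{pseudo_label}. The expression is then $-\sum_{\mathbf z}\ell(\mathbf z)\nabla_{\boldsymbol\theta}\log p_{\text{Ne}}(\mathbf z|\mathbf x,\boldsymbol\theta)$. This is precisely the gradient of \eqref{em_loss} with $\ell$ treated as constant, which concludes the proof. We need $p(\mathbf y|\mathbf x,\boldsymbol\theta)>0$ so that the loss is finite, and we will state this as an assumption.

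The main obstacle is conceptual rather than computational: the equality holds only under the stop-gradient convention on $\ell$. If one also differentiated through $\ell$, an extra term $-\sum_{\mathbf z}\nabla_{\boldsymbol\theta}\ell(\mathbf z)\log p_{\text{Ne}}(\mathbf z|\mathbf x,\boldsymbol\theta)$ would appear, and this term does not vanish in general. We will therefore emphasise that the gradient is taken with $\ell$ frozen at the current $\boldsymbol\theta$, as in the M-step. Optionally, we will remark that this corresponds to $\nabla_{\boldsymbol\theta}Q(\boldsymbol\theta|\boldsymbol\theta^{(t)})$ evaluated at $\boldsymbol\theta=\boldsymbol\theta^{(t)}$.
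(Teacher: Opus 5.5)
Your proposal is correct and follows essentially the same route as the paper's proof in Appendix~\ref{app_equivproof}: move the gradient inside the finite sum, apply the log-derivative trick, and identify $p_{\text{Ne}}(\mathbf z|\mathbf x,\boldsymbol\theta)p_{\text{Sy}}(\mathbf y|\mathbf z)/p(\mathbf y|\mathbf x,\boldsymbol\theta)$ with the posterior $\ell(\mathbf z)$. The paper passes through the joint $p(\mathbf z,\mathbf y|\mathbf x,\boldsymbol\theta)$ and then cancels $p(\mathbf y|\mathbf x,\boldsymbol\theta)$, which amounts to the same step; its final equality also silently relies on your explicit stop-gradient convention on $\ell$, which the paper states only in its description of the M-step, so spelling that out is a welcome addition in rigor.
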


    \begin{proof}
        see Appendix \ref{app_equivproof}
    \end{proof}

    \label{emnesy_versions}
    In practice, EM-NeSy learning follows an online-EM style approach, where each E-step processes a single training pair $(\mathbf x,\mathbf y)$ (or a mini-batch) rather than the entire dataset, and the M-step performs a partial parameter update, scaled by the learning rate of the neural component.
    As noted earlier, the abduced labels \eqref{pseudo_label} may be either soft (probabilistic) or hard (MAP), leading to a soft or hard variant of EM-NeSy learning.
    
    A further approximation of EM-NeSy learning, besides the possible use of an approximate inference component can be achieved by a Multi-M EM approach (see Appendix \ref{app_multim}): 
    Every additional backpropagation step for the same training example $(\mathbf x,\mathbf y)$ with regard to the same loss function saves one potentially expensive re-computation of the posterior of the latent variables.
    Note that performing these multiple small steps leads to a slightly more stable learning process than just increasing the learning rate as the latter can cause instability due to overshooting the optimum.
    Finally, also in EM-NeSy, learnable parameters can be introduced into the symbolic component and optimized jointly with the neural component’s parameters that predict the latent variables $\mathbf Z$ \cite{marra2020rnm}.

\section{Generalization to alternative inference schemes}
    \label{sec_general}
    To update the neural parameters in the M-step, we only need the posteriors \eqref{eq_latent_posterior} of the latent variables, i.e.\ the outcome of the probabilistic inference in the E-step, but not the gradients of this computation.
    Hence, the E-step can be carried out by any symbolic inference engine that can compute posterior marginals or the MAP assignment, regardless of whether it performs exact or approximate inference in a differentiable or non-differentiable fashion.
    \textit{Thus, in general, EM-NeSy learning is applicable to any inference scheme, whether differentiable or not}.
    In this section, we will explore when it is especially useful.
    
\subsection{Exact inference}
    In NeSy models, exact inference is often performed by compiling the symbolic component into an arithmetic circuit in an offline knowledge‑compilation step \cite{chavira2008AC,xu2018semantic,manhaeve2018deepproblog,ahmed2022spl}.
    When this is feasible, gradient backpropagation through the symbolic component becomes highly efficient, as it only requires a single backward pass through the circuit.
    Recall from \cref{sec_emnesyproof} that we established the equivalence between the gradients of the standard-NeSy loss and the EM-NeSy loss. This implies that differentiating through the symbolic component is effectively the same as computing posterior marginals.
    So, put differently, backpropagation through the circuit provides a fast and direct way to obtain the posterior marginals required for EM-NeSy learning.
    This link between differentiating and computing posterior marginals was already noted by Darwiche \cite{darwiche2003differential}, who leveraged the same equivalence for the opposite purpose: proposing a differentiable approach to efficiently compute marginals in Bayesian networks.
    Consequently, when exact inference with arithmetic circuits is possible, EM-NeSy does not offer additional benefits in terms of differentiability, as there are no differentiability challenges in this setting.
    
    However, knowledge compilation can be prohibitively expensive, and in some cases a circuit may fail to compile within a reasonable time even though other exact inference methods remain feasible. In such situations, EM‑NeSy can be used to eliminate any potential differentiability issues.
    Furthermore, even if knowledge compilation is possible but perhaps yields a large arithmetic circuit, Multi-M EM-NeSy learning may still be useful to accelerate convergence without introducing instability issues, such as those caused by increasing the learning rate.

\subsection{Sampling-based inference}
    Although sampling is inherently non-differentiable, several gradient estimators have been proposed for sampling-based inference in NeSy models.
    The most straightforward approach is the score function estimator, commonly known as REINFORCE \cite{sutton1999reinforce}.
    This method relies on a simple application of the ``delta-log trick'' \eqref{log_chain_rule} to rewrite the gradient of the log-likelihood of the label as
    \begin{equation}
        \nabla_{\boldsymbol \theta} p(\mathbf y|\mathbf x,\boldsymbol \theta)
        \overset{\eqref{eq_NeSy_pred}}= \sum_{\mathbf z \in R(\mathbf Z)}p_{\text{Sy}}(\mathbf y|\mathbf z) \nabla_{\boldsymbol \theta} p_{\text{Ne}}(\mathbf z|\mathbf x,\boldsymbol \theta)
        \overset{\eqref{log_chain_rule}}=\mathbb E_{\mathbf z \sim p_{\text{Ne}}(\mathbf z|\mathbf x,\boldsymbol\theta)}[p_{\text{Sy}}(\mathbf y|\mathbf z) \nabla_{\boldsymbol \theta} \log p_{\text{Ne}}(\mathbf z|\mathbf x,\boldsymbol \theta)], \label{reinforce}
    \end{equation}
    thus enabling sampling from the neurally predicted distribution $p_{\text{Ne}}(\mathbf z|\mathbf x,\boldsymbol \theta)$ of the latent variables.
    However, because each sample is weighted by $p_{\text{Sy}}(\mathbf y|\mathbf z) \nabla_{\boldsymbol \theta} \log p_{\text{Ne}}(\mathbf z|\mathbf x,\boldsymbol \theta)$, the variance of REINFORCE can become large.
    Lower-variance alternatives include a Rao–Blackwellized version of REINFORCE \cite{desmet2023indecater} and model-sampling \cite{maene2024weightme, verreet2024exal}, which reduce variance by exploiting conditional expectations or structured sampling strategies.
    Still, these methods fundamentally rely on reformulating the neurosymbolic gradient $\nabla_{\boldsymbol \theta} p(\mathbf y|\mathbf x,\boldsymbol \theta)$.
    In contrast, NeSy-EM learning avoids such case-specific rewritings. It only requires estimates of the posterior marginals $p(\mathbf z|\mathbf y,\mathbf x,\boldsymbol \theta)$, which can be obtained using any sampling method -- such as rejection sampling, importance sampling, Gibbs sampling or Metropolis-Hastings \cite{koller2009pgm} -- without additional adjustments to the NeSy setting.

    \paragraph{Rejection sampling.}
    The most straightforward way to approximate a posterior distribution by sampling is rejection sampling.
    It operates by drawing candidate samples from the prior distribution and then accepting or rejecting each sample based on whether it agrees with the observed data.
    Only the accepted samples are kept, and these form an empirical approximation of the posterior.
    If we employ rejection sampling to EM-NeSy, then we almost recover the REINFORCE gradient estimator:
    \begin{align*}
        \label{em_loss}
        \nabla_\theta \mathcal L_{\text{EM-NeSy}}(\boldsymbol \theta)
        &\overset{\eqref{em_loss}, \eqref{eq_latent_posterior}}= - \sum_{\mathbf z \in R(\mathbf Z)} \frac{p_{\text{Ne}}(\mathbf z|\mathbf x, \boldsymbol \theta)p_{\text{Sy}}(\mathbf y|\mathbf z)}{p(\mathbf y|\mathbf x,\boldsymbol \theta)} \nabla_\theta \log p_{\text{Ne}}(\mathbf z|\mathbf x,\boldsymbol \theta)\\
        &=\frac 1 {p(\mathbf y|\mathbf x,\boldsymbol \theta)}\cdot \mathbb E_{\mathbf z \sim p_{\text{Ne}}(\mathbf z|\mathbf x,\theta)}[p_{\text{Sy}}(\mathbf y|\mathbf z) \nabla_{\boldsymbol \theta} \log p_{\text{Ne}}(\mathbf z|\mathbf x,\boldsymbol \theta)].
    \end{align*}
    We would sample from the same distribution as in \eqref{reinforce} and divide the estimates for all gradients by the likelihood of the observed label.
    This adjustment reflects the fact that REINFORCE approximates the gradient of the likelihood itself, whereas EM‑NeSy targets the gradient of the log‑likelihood which differs by the factor ${p(\mathbf y|\mathbf x,\boldsymbol \theta)}$ (see \eqref{log_chain_rule}).

    \paragraph{ABC rejection sampling.}
    \label{abc}
    Rejection sampling -- and thus also REINFORCE -- struggle in large sampling spaces because it rarely produces samples that satisfy the evidence.
    The same is the case for likelihood weighting~\cite{koller2009pgm} under determinism as the likelihood weights then become deterministic as well.
    Approximate Bayesian Computation (ABC) sampling~\cite{tavare1997inferring, beaumont2002approximate}, which is closely related to Pseudo maximum-likelihood estimation~\cite{white1982maximum, gourieroux1984pseudo}, provides a more flexible alternative to likelihood weighting.  
    In likelihood weighting, we sample from the neurally predicted prior $p_{\textbf{Ne}}(\cdot \mid \mathbf x,\boldsymbol \theta)$ and weigh each sample $\mathbf z$ with the likelihood value $p_{\text{Sy}}(\mathbf y \mid \mathbf z)$.  
    However, if the likelihood is intractable to compute or deterministic -- as in the case of Sudoku -- the likelihood weights become deterministic as well and tend to collapse to zero because the evidence is extremely sparse.  
    (Sampling a valid $9 \times 9$ Sudoku configuration is highly unlikely.)

    ABC sampling addresses this issue by introducing
        \emph{summary statistics} $s_{\mathbf{Z}} : R(\mathbf{Z}) \rightarrow \mathcal{S}$ and $s_{\mathbf{Y}} : R(\mathbf{Y}) \rightarrow \mathcal{S}$, which map the potentially complex instantiations of $\mathbf{Z}$ and $\mathbf{Y}$ into a simpler comparison space $\mathcal{S}$ (e.g. $\mathcal S = \mathbb R$),
        and a \emph{distance metric} $\rho : \mathcal{S} \times \mathcal{S} \rightarrow \mathbb{R}$, such as the $L^1$- or $L^2$-norm. 
    In ABC sampling, the exact likelihood weight is replaced by the more flexible discrepancy measure $\frac 1 C \rho\!\left(s_{\mathbf{Z}}(\mathbf z),\, s_{\mathbf{Y}}(\mathbf y)\right)$ (where $C$ denotes a normalization constant), which allows inference to proceed even when the likelihood is unavailable or degenerate.
    A general strategy for selecting summary statistics and discrepancy measures is as follows:
    We decompose the constraint determining whether a sample is valid into a set of constraints $M$.
    For a given sample \(\mathbf z\), let \(k\) denote the number of constraints in \(M\) that it satisfies, and let $s_{\mathbf Z}(\mathbf z):= \frac k {|M|}$.
    By choosing $s_{\mathbf Y}(\mathbf y)=0$ and $\rho = L^1$, the weight assigned to the sample $\mathbf z$ is $\frac 1 C \rho(s_{\mathbf Z}(\mathbf z),s_{\mathbf Y}(\mathbf y)) = \frac 1 C |\frac k {|M|} - 0| = \frac k {C|M|}$.
    Thus, the weight of a sample corresponds to the fraction of constraints it satisfies.
    
    \begin{example}
    \label{example_sudoku}
        In the case of Sudoku, \(M\) may consist of the constraints that certain pairs of digits -- namely those sharing a row, column, or box -- must be different. For an addition task, \(M\) could contain constraints specifying that each individual digit of the sum is correct. In a path-finding task on a grid, \(M\) could consist of constraints indicating whether each grid cell lies on the path or not.
    \end{example}

    A slightly more refined variant of ABC sampling can be obtained by decomposing the sample \(\mathbf z\) into its individual components \((z_1, \dots, z_n)\).
    For each component \(z_i\), we then consider only the subset \(M_i \subset M\) of constraints that depend most on \(Z_i\).
    Let \(k\) denote the number of constraints in \(M_i)\) that are satisfied by \(z_i\).
    We can then assign to each component \(z_i\) the weight
    $
    \frac 1 C \rho\bigl(s_{Z_i}(z_i), s_{\mathbf Y}(\mathbf y)\bigr)
    = \frac 1 C | \frac{k}{|M_i|} - 0 |
    = \frac{k}{C|M_i|},
    $
    which corresponds to the fraction of \emph{relevant} constraints that \(z_i\) satisfies.
    
    For more details, see \cref{app_abc}.

    \subsection{Iterative inference algorithms}
    Differentiability challenges can also arise in iterative approximate inference methods such as belief propagation \cite{pearl1994belief}, or dual decomposition and linear-programming-based relaxations for MAP inference \cite{jojic2010dual,sontag2011dual}. 
    While these algorithms are differentiable in theory, backpropagating through their iterations in practice often requires maintaining a large computation graph or resorting to expensive techniques such as gradient checkpointing or implicit differentiation.
    In contrast, EM-NeSy learning removes this overhead entirely, as it only requires posterior marginals rather than differentiating through the inference procedure.
    Conveniently, belief propagation computes all posterior marginals or MAP assignments during inference, making them immediately available for use without any additional steps.
    
\section{Related Work}
    \label{sec_related}
        While the equivalence of computing gradients and posterior marginals on which the proof in \cref{sec_emnesyproof} is based, has already been exploited by Darwiche to substitute exact inference with differentiation in Bayesian networks \cite{darwiche2003differential}, we use it to do the converse: to replace differentiation by (approximate) inference.

        A notable special case of EM-NeSy learning is found in Relational Neural Machines (RNMs) \cite{marra2020rnm}, a neurosymbolic framework that jointly learns neural and symbolic parameters while performing relaxation-based approximate MAP inference via gradient descent.
        Although RNMs employ the EM algorithm to avoid the computational overhead of differentiating through this iterative inference procedure, they do not introduce EM as a general framework for NeSy learning.

        A closely related framework to EM-NeSy learning is Implicit Maximum Likelihood Estimation (I-MLE) \cite{niepert2021implicit}, a general framework for backpropagating through discrete exponential-family distributions.
        I-MLE operates by correcting the neurally predicted distribution $p_{\boldsymbol \theta}(\cdot)$ by inferring from the label $\mathbf y$ a \emph{target distribution} $q_{\boldsymbol \theta'}(\cdot)$, and then minimizing the KL-divergence between $p_{\boldsymbol \theta}(\cdot)$ and $q_{\boldsymbol \theta'}(\cdot)$ --
        an approach conceptually similar to EM-NeSy learning.
        However, EM-NeSy offers an actual general-purpose choice for the target distribution $q_{\boldsymbol \theta}$: the posterior $p(\mathbf z|\mathbf y,\mathbf x,\boldsymbol \theta)$, thereby establishing a direct connection to the well-studied EM algorithm.
        Moreover, EM-NeSy provides a more general framework: while I-MLE prescribes a specific perturb-and-MAP inference strategy which relies on the availability of a MAP-solver, EM-NeSy simply requires an approximation of the posterior, a problem that has been extensively studied. This flexibility allows EM-NeSy to integrate any black-box inference engine.

        Another related neurosymbolic framework is EXAL \cite{verreet2024exal}, which is based on sampling and follows an approach similar to I-MLE.
        Given the label $\mathbf y$, the method samples a set of explanations, namely instantiations of the latent variables $\mathbf Z$.
        Reweighting these samples using the neurally predicted distribution $p_{\text{Ne}}(\mathbf z | \boldsymbol\theta)$ yields an \emph{auxiliary distribution} $q_{\boldsymbol\theta}(\mathbf x)$.
        The model parameters are then optimized by minimizing the KL divergence between $p$ and $q$.
        EM‑NeSy replaces this auxiliary distribution with the full posterior, making EXAL an instance of the EM‑NeSy framework that approximates the posterior via the auxiliary distribution.
        
        While state-of-the-art NeSy models realize the integration of neural perception and symbolic reasoning by backpropagating gradients through the symbolic component \cite{xu2018semantic,manhaeve2018deepproblog,huang2021scallop,ahmed2022spl},
        Abductive Learning (ABL) \cite{dai2019bridging,jia2025psp,hu2025efficient} sidesteps the need for a differentiable symbolic component in a similar way like EM-NeSy learning:
        The symbolic component corrects neural predictions by abductive logic reasoning.
        While the neural predictions can be probabilities like in EM-NeSy, ABL is not a probabilistic model as it only uses these probabilities or weights to determine which neural outputs will be corrected, but the reasoning itself remains deterministic \cite{jia2025psp}, while in EM-NeSy the abduction itself is probabilistic.
    
\section{Experiments}
    \label{sec_experiments}
    
    To assess the scalability and flexibility of EM-NeSy, we conduct a set of experiments addressing the following questions:
    \textbf{Q1}: Under \emph{exact inference}, does EM-NeSy recover the same learning signal as end-to-end differentiable NeSy while reducing memory and runtime?
    \textbf{Q2}: How does EM-NeSy behave when the E-step is performed with \emph{approximate or non-differentiable inference}?

\subsection{Benchmarks}
    To answer our questions, we conduct experiments on three widely used NeSy benchmarks that all present weakly supervised tasks combining visual perception with symbolic reasoning.
    
    \textbf{Multi-Digit MNISTAdd \cite{manhaeve2018deepproblog}.}
    Each input consists of a sequence of \(2n\) MNIST~\cite{deng2012mnist} images, for some \(n \geq 1\), representing an addition task where two numbers of equal length are summed (e.g., \(495 + 037\) for $n=3$).
    The dataset provides only the final sum as the label for each sequence, without individual digit annotations.\\
    \textbf{Visual Sudoku. \cite{augustine2022visual}}  
    Each input corresponds to a \(4 \times 4\)- or \(9 \times 9\)-grid of MNIST images representing a Sudoku puzzle. Supervision is limited to a single binary label indicating whether the grid constitutes a valid Sudoku configuration, without explicit guidance on individual cell values.\\
    \textbf{Warcraft path-planning \cite{vlastelica2019differentiation}.} Each input consists of a visual grid of $12 \times 12$- or $30 \times 30$ cells of five different terrain types with different costs of traversing them. Supervision is a binary mask indicating the shortest path from the top-left to the bottom-right corner.

\subsection{Baselines}
We compare EM‑NeSy against state‑of‑the‑art NeSy models, namely A‑NeSI~\cite{krieken2023anesi}, I‑MLE~\cite{niepert2021implicit},
EXAL~\cite{verreet2024exal},
Scallop~\cite{huang2021scallop} and exact inference based on knowledge compilation \cite{winters2022deepstochlog,xu2018semantic,ahmed2022spl}, noting that not all methods are evaluated on every benchmark.
The exact inference methods we use (namely DeepStochLog (DSL)~\cite{winters2022deepstochlog}, Semantic Loss (SL)~\cite{xu2018semantic} and Semantic probabilistic Layer (SPL)~\cite{ahmed2022spl}) as well as Scallop which employs a top-$k$ approximaization are end-to-end differentiable and rely on knowledge compilation, which limits scalability. They are ill‑suited for the path‑planning task, where shortest‑path computation forms a large combinatorial argmin problem that does not align well with logic‑based differentiable inference.
A‑NeSI replaces symbolic inference with a neural approximation, but on MNIST‑Addition we observe overflow when scaling to $100$ digits.
I‑MLE and EXAL are more closely related to EM‑NeSy in that they rely on sampling‑based training objectives. However, neither I‑MLE nor EXAL is applicable to the Visual‑Sudoku task, as both require either a MAP solver or an efficient sampler over valid Sudoku solutions conditioned on predicted digit probabilities -- which is prohibitively expensive.
In EM‑NeSy, we use multiple inference approaches: sum–product belief propagation (BP‑EM), exact for MNIST‑Addition and approximate for Visual Sudoku due to loopy underlying graphs; max–product belief propagation (BP‑EM‑M), yielding hard EM; and ABC sampling (EM‑ABC). We additionally include an end‑to‑end differentiable baseline with the same sum–product belief propagation inference (BP‑std).

\subsection{Results}

In this section, we present our experimental results; for more details, see Appendix \cref{exp_details}.

\begin{table*}
        \centering
        \caption{Accuracy and training time on the MNISTAdd task.}
        \label{table_exact}
        \hspace{2.3cm}\textbf{accuracy in $\%$} \hspace{4.9cm} \textbf{training time in $min$}\\
        \begin{tabular}{l|ccc|ccc}
            \hline
            number of digits & $4$ & $15$ & $100$ & $4$ & $15$ & $100$\\
            \hline
            A-NeSI & $93.28 \pm 0.25$ & $55.88 \pm 9.30$ & -- & $25.75$ & $57.55$ & --\\
            EXAL & $90.71 \pm 1.01$ & $62.62 \pm 4.24$ & $6.67 \pm 7.83$ & $4.17$ & $42.44$ & $372.20$\\
            DSL\footnotemark \small{(exact)}
                & $92.70 \pm 0.60$ & T/O & T/O & -- & T/O & T/O\\
             BP-std \small{(exact)} & $92.16 \pm 0.23$ & $\mathbf{72.79 \pm 1.62}$ & $11.60 \pm 3.85$ & $2.21$ & $7.93$ & $36.26$\\
             \hline
             BP-EM \small{(ours)} & $92.53 \pm 0.62$ & $\mathbf{74.85 \pm 1.83}$ & $9.20 \pm 9.76$ & $\mathbf{1.33}$ & $\mathbf{1.95}$ & $\mathbf{8.51}$\\
             BP-EM-M \small{(ours)} & $33.44 \pm 33.49$ & $30.36 \pm 40.09$ & $0.00 \pm 0.00$ & $\mathbf{1.33}$ & $\mathbf{2.38}$ & $13.09$\\
        \end{tabular}
        \end{table*} 

    \begin{table*}
        \centering
        \caption{Accuracy and training time on the Visual Sudoku task.}
        \label{table_Sudoku}
        \textbf{\hspace{3.4cm} accuracy in $\%$ \hspace{2.3cm} training time in $min$}\\
        \begin{tabular}{l|cc|cc}
            \hline
            Sudoku size & $4\times4$ & $9\times9$& $4\times4$ & $9\times9$\\
            \hline
            A-NeSI & $87.20 \pm 2.18$ & $\mathbf{59.20 \pm 2.06}$ & $46.08$ & $73.33$\\
             Scallop & $75.00 \pm 0.45$ & T/O & $\mathbf{1.52}$ & T/O \\
             SL\footnotemark[\value{footnote}] \small{(exact)} & $86.70 \pm 0.50 $ &  T/O & -- & T/O\\
             BP-std & $87.80 \pm 2.05$ & $0.50 \pm 0.00$ & $38.23$ & --\\
             \hline
             BP-EM \small{(ours)} & $89.70 \pm 3.05$ & $0.50 \pm 0.00$ & $21.23$ & --\\
             ABC-EM \small{(ours)} & $86.30 \pm 1.30$ & $53.30 \pm 3.15$ & $\mathbf{1.40}$ & $\mathbf{8.24}$\\
        \end{tabular}
    \end{table*}

    \begin{table*}
        \centering
        \caption{Accuracy and training time on the Warcraft path-planning task.}
        \label{table_warcraft}
        \textbf{\hspace{2cm} accuracy in $\%$ \hspace{2.4cm} training time in $min$}\\
        \begin{tabular}{l|cc|cc}
            \hline
            grid size & $12\times 12$ & $30\times 30$& $12\times 12$ & $30\times 30$\\
            \hline
            A-NeSI & $\mathbf{98.96 \pm 1.33} $ & $67.57 \pm 36.76$ & $439.10$ & $1596.51$\\
            I-MLE & $95.34 \pm 0.20$ & $\mathbf{93.4 \pm 0.64}$ & $26.77$ & $227.82$\\
            EXAL\footnotemark[\value{footnote}] & $94.19 \pm 1.74$  & $80.85 \pm 3.83$ & $\mathbf{11.1 \pm 0.1}$ & $84.3 \pm 0.7$\\
            SPL\footnotemark[\value{footnote}] \small{(exact)} & $78.2 $ & T/O & -- & T/O \\
             \hline
             ABC-EM (ours) & $\mathbf{98.80 \pm 0.41}$ & $69.60 \pm 35.64$ & $\mathbf{7.26}$ & $\mathbf{8.37}$\\
        \end{tabular}
    \end{table*}
    \footnotetext{Accuracies for Semantic Loss and Semantic Probabilistic Layer are taken from~\cite{krieken2023anesi}, accuracies and timings for EXAL on Warcraft path-planning and accuracies for DeepStochLog from~\cite{verreet2024exal}. Note that EXAL was executed on a different machine.}

    \textbf{Q1}: To evaluate EM‑NeSy under exact inference, we use the MNISTAdd benchmark and compare EM‑NeSy (BP‑EM) with the standard NeSy learning setting (BP‑std), both using sum‑product belief propagation\footnote{Belief propagation operates on a tree in this task and is therefore exact.}, and with DeepStochLog.
    The corresponding results are reported in~\cref{table_exact}.
    As predicted by \cref{equivproof}, accuracies are nearly identical, with minor numerical differences resulting in increased variance for EM‑NeSy at 100 digits.
    Knowledge compilation prevents DeepStochLog from scaling to 15 digits, whereas EM‑NeSy scales to 100 digits and is faster than end‑to‑end NeSy since it avoids backpropagation through iterative updates and gradient‑checkpointing overhead.
    This also leads to a substantial reduction in peak memory usage, from $90.61$\,GB to $35.52$\,GB at 100 digits.
    Hence, the answer to Q1 is that under exact inference, EM-NeSy matches the accuracy of the standard end-to-end NeSy setting while offering faster training and decreased memory usage.

    \textbf{Q2}:
    To evaluate EM‑NeSy beyond exact inference, we instantiate the E‑step with several approximate or non‑differentiable inference procedures. Our experiments show that, unlike many other methods, EM‑NeSy can easily incorporate different inference procedures directly, without requiring differentiation through the symbolic component, enabling a direct comparison between them. Moreover, it scales to all experimental settings where exact inference times out. However, learning performance depends strongly on the quality of the approximate posterior, thereby inheriting the strengths and limitations of the underlying inference method. Hard EM (BP‑EM‑M) often converges to poor local optima on MNISTAdd, likely due to the bias introduced by MAP inference. Loopy belief propagation (BP‑EM) fails on the $9\times9$ Sudoku grid because of the highly loopy factor graph structure. ABC sampling (ABC-EM) exhibits high performance variance on the larger instances across all three benchmarks. Nonetheless, it is noteworthy that ABC sampling achieves state‑of‑the‑art results in all smaller settings, and in some runs on the larger ones, while keeping training time minimal, despite being a considerably more general inference technique.
With respect to Q2, these results show that EM‑NeSy can be applied in settings where end‑to‑end differentiation is unavailable or impractical, while remaining computationally efficient. Overall, EM‑NeSy is robust to moderately approximate E‑steps, but increasing inference noise leads to higher variance and may eventually lead training to break down.
    

\section{Conclusion}
    \label{sec_conlusion}
EM-NeSy shows that NeSy learning can be viewed as an instance of the EM algorithm.
In contrast to state‑of‑the‑art NeSy approaches -- which typically require inference‑specific modifications to ensure differentiability under approximate inference -- EM‑NeSy provides a unified learning framework that supports arbitrary approximate probabilistic inference methods by eliminating the need for differentiable symbolic inference. This substantially simplifies model design and enables the development of more scalable and flexible NeSy systems.

\paragraph{Limitations and future work}
    While EM‑NeSy accommodates both exact and approximate inference without differentiability requirements, it does not address how to select effective approximations, and our experimental results depend on task‑specific hyperparameter tuning.

\section*{Acknowledgements}
This research received funding from the Flemish Government (AI Research Program), from the Flanders Research Foundation (FWO) under project G047124N and from the European Research Council (ERC) under the European Union's Horizon Europe research and innovation programm (grant agreement No. 101142702).

\begin{small}
\bibliography{ref}
\bibliographystyle{plain}
\end{small}

\appendix

\appendix
\section{Proof of \cref{equivproof}}
\label{app_equivproof}
    \begin{proof}
        Recall that, by the chain rule, we have
        \begin{equation}
            \label{log_chain_rule}
            \nabla_{\boldsymbol \theta} \log f(\boldsymbol \theta) = \frac{\nabla_{\boldsymbol \theta} f(\boldsymbol \theta)}{f(\boldsymbol \theta)}
        \end{equation}
        for any differentiable function $f$.
        Hence, the claim follows from the following sequence of equalities:
        \begin{align*}
            &\nabla_{\boldsymbol \theta} \mathcal L(\boldsymbol \theta) \overset{\eqref{std_loss}}= - \nabla_{\boldsymbol \theta} \log p(\mathbf y|\mathbf x,\boldsymbol \theta) \\
            &\overset{\eqref{log_chain_rule}}= -\frac{\nabla_{\boldsymbol \theta} p(\mathbf y|\mathbf x,\boldsymbol \theta)}{p(\mathbf y|\mathbf x,\boldsymbol \theta)}\\
            &\overset{\eqref{std_loss}}= -\frac {\nabla_{\boldsymbol \theta} \sum_{\mathbf z \in R(\mathbf Z)} p_{\text{Ne}}(\mathbf z|\mathbf x,\boldsymbol \theta)p_{\text{Sy}}(\mathbf y|\mathbf z)}{p(\mathbf y|\mathbf x,\boldsymbol \theta)}  \\
            &= -\frac {\sum_{\mathbf z \in R(\mathbf Z)} p_{\text{Sy}}(\mathbf y|\mathbf z) \nabla_{\boldsymbol \theta} p_{\text{Ne}}(\mathbf z|\mathbf x,\boldsymbol \theta)} {p(\mathbf y|\mathbf x,\boldsymbol \theta)}  \\
            &\overset{\eqref{log_chain_rule}}= -\frac {\sum_{\mathbf z \in R(\mathbf Z)} p_{\text{Sy}}(\mathbf y|\mathbf z) p_{\text{Ne}}(\mathbf z|\mathbf x,\boldsymbol \theta) \nabla_{\boldsymbol \theta} \log p_{\text{Ne}}(\mathbf z|\mathbf x,\boldsymbol \theta)} {p(\mathbf y|\mathbf x,\boldsymbol \theta)}  \\
            &\overset{\eqref{eq_NeSy_pred}}= -\frac {\sum_{\mathbf z \in R(\mathbf Z)} p(\mathbf z,\mathbf y|\mathbf x,\boldsymbol \theta) \nabla_{\boldsymbol \theta} \log p_{\text{Ne}}(\mathbf z|\mathbf x,\boldsymbol \theta)} {p(\mathbf y|\mathbf x,\boldsymbol \theta)}  \\
            &\overset{\eqref{eq_latent_posterior}}= -\frac {\sum_{\mathbf z \in R(\mathbf Z)} p(\mathbf y|\mathbf x,\boldsymbol \theta) p(\mathbf z|\mathbf y,\mathbf x,\boldsymbol \theta) \nabla_{\boldsymbol \theta} \log p_{\text{Ne}}(\mathbf z|\mathbf x,\boldsymbol \theta)} {p(\mathbf y|\mathbf x,\boldsymbol \theta)}  \\
            &= -\sum_{\mathbf z \in R(\mathbf Z)} p(\mathbf z|\mathbf y,\mathbf x,\boldsymbol \theta) \nabla_{\boldsymbol \theta} \log p_{\text{Ne}}(\mathbf z|\mathbf x,\boldsymbol \theta)\\
            &\overset{\eqref{pseudo_label}}= -\sum_{\mathbf z \in R(\mathbf Z)} \ell(\mathbf z) \nabla_{\boldsymbol \theta} \log p_{\text{Ne}}(\mathbf z|\mathbf x,\theta)\\
            &= \nabla_{\boldsymbol \theta} \mathcal L_{\text{EM-NeSy}}(\theta).
        \end{align*}
    \end{proof}
\section{Multi-M EM}
    \label{app_multim}

    \begin{algorithm}
\caption{One training step of Multi-M EM}
\begin{algorithmic}
\REQUIRE training sample $(\mathbf x, \mathbf y)$, number of $M$ steps $m$
\STATE Compute $p_{\text{Ne}}(\mathbf z| \mathbf x,\boldsymbol \theta)$
\STATE Compute $p_{\text{Sy}}(\mathbf z | \mathbf y, \mathbf x, \boldsymbol \theta)$
\STATE For $i=1$ to $m$: Backpropagate w.r.t. $\mathcal L_{\text{EM-NeSy}}(\boldsymbol \theta)$
\end{algorithmic}
\end{algorithm}
    
\section{Experimental details}
    \label{exp_details}

    All experiments were performed using a single NVIDIA L40S GPU with 48\,GB of memory on a machine equipped with a 128-thread CPU and 256\,GB of RAM.
    For all experiments, we report the mean and standard deviation of the accuracies over random seeds $0,\dots,4$.
    We compute training times by 

    \subsection{Benchmarks}
    \label{benchmarks}

    \paragraph{MNISTAdd} The MNIST dataset~\cite{deng2012mnist} contains $60{,}000$ training and $10{,}000$ test examples. We further split the original training set into a training and a validation split of sizes $50{,}000$ and $10{,}000$, respectively. From each of the resulting three MNIST splits and for a given number of digits $n$, we construct MNISTAdd datasets of size $\frac{50{,}000}{2n}$. Each MNIST digit is used in at most one example, where every example consists of $2n$ MNIST images representing two $n$-digit numbers, and the corresponding label is their sum.

    \paragraph{Visual Sudoku}
    \label{app_visudo}
    The predefined Visual Sudoku datasets \cite{augustine2022visual} consist of $500$ Sudoku grids of size either $4\times4$ or $9\times9$, split into $100$ training examples, $200$ validation examples, and $200$ test examples.
    A total of $11$ splits are provided. We use the first one for all our experiments.
    Within each split, there is no overlap between the MNIST images used in the training, validation, and test sets.
    All training puzzles are valid Sudokus, whereas in the validation and test sets half of the puzzles are invalid.
    Invalid puzzles are generated by corrupting valid ones, either by replacing a correct digit with an incorrect one or by swapping two valid digits.

    \paragraph{Warcraft path-planning}
    The predefined Warcraft path-planning datasets~\cite{vlastelica2019differentiation} consist of grid-based maps of size $12\times12$ or $30\times30$. The maps are generated using a custom random sampling procedure based on $142$ terrain tiles from the \emph{Warcraft~II} game environment. Each tile belongs to one of five terrain types, each associated with a fixed traversal cost ranging from $0.8$ to $9.2$. For each map, the supervision is given as a $12\times12$ or $30\times30$ binary matrix indicating the shortest path from the top-left to the bottom-right corner. The training, validation, and test splits contain $10{,}000$, $1{,}000$, and $1{,}000$ examples, respectively.

    \subsection{Neural component}
    For MNIST-digit recognition, we use a LeNet-style convolutional neural network~\cite{lecun1998gradient} consisting of two convolution--max-pooling blocks with $5\times5$ kernels and ReLU activations, followed by three fully connected layers.
    Given inputs of size $28\times28$, the network produces $N$-dimensional class probabilities for each digit.
    For inputs containing multiple digits, the same network is applied independently to each digit via weight sharing.

    To extract the latent costs of the terrain tiles in the Warcraft path-finding task, we use the first five layers of ResNet18~\cite{he2016deep} followed by a max-pooling operation. The resulting continuous value for each grid cell is then mapped to a probability distribution over the five terrain types by computing the $L^2$ distances to learnable class centers (initialized with $0.0, 1.0, 2.0, 3.0, 4.0$) and applying a softmax transformation with temperature $3$.

    \subsection{Symbolic component}

    \paragraph{Belief propagation}
    We use our own implementation of Pearl's loopy belief propagation algorithm \cite{pearl1994belief}.
    It employs a synchronized message‑passing scheme in which all messages are updated in parallel.

    For MNISTAdd, the underlying factor graph contains $2n$ categorical variables for each input digit and $n$ categorical variables for each sum digit, as well as $n+1$ binary carry variables. 
    The resulting graph is a tree, so belief propagation yields exact marginals. 
    We perform $2n$ iterations of belief propagation.

    For Visual Sudoku, the factor graph contains $n^2$ categorical variables for ($n \in\{4,9\}$) for every cell of the Sudoku grid and $m = 2 \cdot n \cdot \binom n 2 + \frac{n^2 \cdot (\sqrt n - 1)} 2 \in \{56,810\}$ binary variables for each of the $m$ pairs of cells that share a row, column, or box. 
    In addition, $m-1$ binary variables encode constraints that combine subsets of $2$ to $m$ of the $m$ pair constraints. We perform $m$ iterations of belief propagation, using a damping parameter of $10^{-2}$ for the message updates. To avoid numerical degeneracies in deterministic factors where message entries may collapse to zero we apply a clamping parameter of $10^{-5}$.

    \paragraph{ABC rejection sampling}
    \label{app_abc}

    Both in the Visual Sudoku and Warcraft path-planning task, we follow the approach described in \cref{abc}. We sample uniformly from the neurally predicted distribution $p_{\text{Ne}}(\cdot|x,\theta)$ for the Sudoku digits or Warcraft grid cells, which factorizes into the distributions for the single digits/grid cells $Z_i$ for $i=1,\dots,n$.
    
    For the Visual Sudoku experiment, let $m_{ij}$ denote the constraint $Z_i \ne Z_j$.
    As described in \cref{example_sudoku}, we define $$M:=\{m_{ij}: Z_i,Z_j \text{ share a row column or box in the Sudoku}\}$$ to be the set of Sudoku constraints. For every fixed $i$ let $$M_i := \{m_{ij} \in M : j \in \{1,\dots,n^2\}\}$$ be the set of all constraints that involve $Z_i$.
    We further choose the distance measure $\rho = L^1$ and the summary statistics $s_{\mathbf Y}(\mathbf y) = 0$ and $s_{Z_i}(z_i) = \frac{k}{|M_i|}$ where $C$ is a normalization constant and $k$ denotes the number of constraints from $M_i$ that are satisfied by the sample $z_i$.
    Then we assign the weight
    $$
    \frac 1 C \rho\bigl(s_{Z_i}(z_i), s_{\mathbf Y}(\mathbf y)\bigr)
    = \frac 1 C \left| \frac{k}{|M_i|} - 0 \right|
    = \frac{k}{C|M_i|},
    $$ to each sample $z_i$ where $C$ denotes a normalization constant.

    In the Warcraft experiment, we employ a Dijkstra solver that predicts the shortest path for each sample. We let $s_{Z_i}(z_i) = 1$ if the predicted shortest path for the sample $\mathbf z$ matches the true shortest path on the grid cell $Z_i$ (i.e.\ both the predicted and true shortest path contain $Z_i$ or they both don't).
    We let $s_{\mathbf Y(\mathbf y)} = 0$ and assign the weight $
    \frac 1 C \rho\bigl(s_{Z_i}(z_i), s_{\mathbf Y}(\mathbf y)\bigr)
    $ to the sample $z_i$.

    \subsection{Hyperparameters}
    \label{hyperparam}

    We used the validation splits for hyperparameter tuning.
    At test time, we report results using the checkpoint that achieves the highest validation accuracy.
    We employ the Adam optimizer for all experiments.
    
    For MNISTAdd, we use a cosine‑annealing learning‑rate schedule updated after each epoch, beginning at $10^{-3}$ and decaying to $10^{-4}$.
    We train for $30$ epochs with batch sizes of $50$, $10$ and $2$ for $4$, $15$ and $100$ digits, respectively.

    For Visual Sudoku, we use a cosine‑annealing learning‑rate schedule updated after each epoch, beginning at $10^{-3}$ and decaying to $10^{-6}$.
    We train with a batchsize of $5$ for $500$ and $200$ epochs for $4\times4$ and $9\times9$ Sudokus, respectively.
    For ABC sampling, we use $1000$ samples on $4\times4$ Sudokus and $10{,}000$ for $9\times9$ Sudokus.

    For the Warcraft path-planning experiment, we use a constant learning‑rate of $3\cdot 10^{-4}$ on the $12\times12$ grid and $10^{-4}$ on the $30\times30$ grid.
    We train with a batchsize of $50$ for $5$ and $10$ epochs, respectively.
    We use an entropy regularization parameter of $0.03$ for training on the $12\times12$ grid.
    For ABC sampling, we used $50$ and $20$ samples on $12\times12$ and $30\times30$ grids, respectively.
    
    \subsection{Baselines}

We ran the experiments for A‑NeSI~\cite{krieken2023anesi}, I‑MLE~\cite{niepert2021implicit},
EXAL~\cite{verreet2024exal} and
Scallop~\cite{huang2021scallop} with the hyperparameters specified in the original papers.
For A-NeSI we use the symbolic prediction variant.
For A-NeSI on the Warcraft path-planning benchmark, we were not able to reproduce the results due to high variance and training times, so we cite the accuracies from the original paper and report training time on our machines for a single run.
For EXAL on the Warcraft path-planning benchmark, we report training times and accuracies from the original paper. Note that these experiments were performed on a another machine, namely a Dell XPS 15 (i7, 16GB, 512GB, FHD, GPU).



\end{document}